\let\csname Cross\endcsname\relax
\newtheorem{theorem}{Theorem}
\title{\LARGE \bf
LIO-PPF: Fast LiDAR-Inertial Odometry via\\Incremental Plane Pre-Fitting and Skeleton Tracking
}
\author{Xingyu Chen, Peixi Wu, Ge Li and Thomas H. Li
\thanks{All authors are with the School of Electronic and Computer Engineering, Peking University, China. Thomas H. Li is the corresponding author. E-mail: {\texttt{\{cxy,wupeixi\}@stu.pku.edu.cn\quad geli@ece.pku.edu.cn\quad thomas@pku.edu.cn}}.}
\thanks{\textit{Acknowledgment:} This work was supported by National Natural Science Foundation of China (No. 62172021).}
}
\begin{document}

\maketitle
\thispagestyle{empty}
\pagestyle{empty}

\begin{abstract}
As a crucial infrastructure of intelligent mobile robots, LiDAR-Inertial odometry (LIO) provides the basic capability of state estimation by tracking LiDAR scans.
The high-accuracy tracking generally involves the \textit{k}NN search, which is used with minimizing the point-to-plane distance.
The cost for this, however, is maintaining a large local map and performing \textit{k}NN plane fit for each point.
In this work, we reduce both time and space complexity of LIO by saving these unnecessary costs.
Technically, we design a plane pre-fitting (PPF) pipeline to track the basic skeleton of the 3D scene.
In PPF, planes are not fitted individually for each scan, let alone for each point, but are updated incrementally as the scene `flows'.
Unlike \textit{k}NN, the PPF is more robust to noisy and non-strict planes with our iterative Principal Component Analyse (iPCA) refinement.
Moreover, a simple yet effective sandwich layer is introduced to eliminate false point-to-plane matches.
Our method was extensively tested on a total number of 22 sequences across 5 open datasets, and evaluated in 3 existing state-of-the-art LIO systems.
By contrast, LIO-PPF can consume only 36\% of the original local map size to achieve up to 4$\times$ faster residual computing and 1.92$\times$ overall FPS, while maintaining the same level of accuracy.
We fully open source our implementation at \url{https://github.com/xingyuuchen/LIO-PPF}.
\end{abstract}

\mathchardef\mhyphen="2D
\definecolor{aug_our_method_color}{RGB}{0,131,9}
\definecolor{X_color}{RGB}{255,0,0}
\definecolor{checkmark_color}{RGB}{0,176,80}
\definecolor{shallow_grey}{RGB}{170,170,170}

\section{Introduction}\label{sec:intro}

Estimating the ego-motion and meanwhile building the 3D map of the environment (SLAM) is a fundamental tool for intelligent mobile robots, which is requisite for many downstream applications such as route planning and obstacle avoidance.
Among all the sensors, 3D LiDARs provide accurate, long-range and light-invariant detection of the environment;
Inertial Measurement Unit (IMU) measures the robot's movement at high frequency and can therefore compensate each LiDAR point's motion.
As a result, these two sensors are widely deployed together on mobile robots.

\begin{figure}[htbp]
    \centerline{\includegraphics[width=\columnwidth]{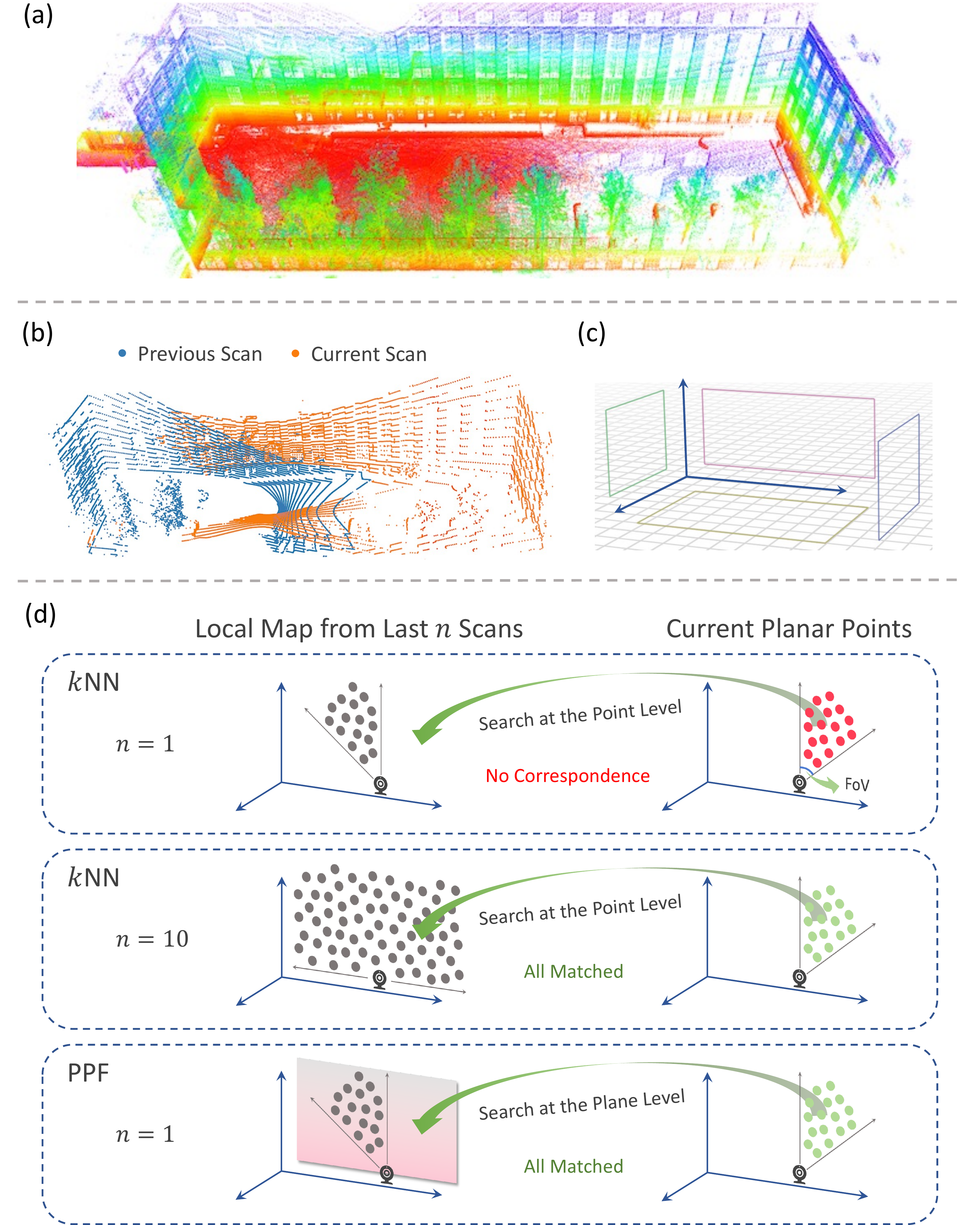}}
    \caption{\textbf{(a)} The reconstructed scene. \textbf{(b)} Two consecutive scans of (a) under rapid rotations. \textbf{(c)} Large planes across the scans form the \textit{basic skeleton} of the scene and reveal its overall geometry structure. \textbf{(d)} A large local map is needed by \textit{k}NN otherwise most points can't find neighbors to fit planes. While we use the basic skeleton to represent the scene for point matching. The search field is enlarged to plane level with no \textit{k}NN search required.}
    \label{fig:plane_win_size}
\end{figure}

Though deep learning methods have made great progress and sometimes outperform traditional LIO, not to mention the poor generalizability across different datasets, the high power of modern GPUs is still too expensive, which is not applicable to most battery-powered mobile agents.
Moreover, in typical intelligent robots, LIO runs only at the low level of the overall system, where high-level tasks (\textit{e.g.} route planning) can take up most computational resources.
Therefore, all infrastructures should be guaranteed to run robustly and efficiently.
For this, this paper focuses on reducing \textit{both} the time and space complexity of the low-level LIO systems.

Modern real-time LIO solves the ego-motion by registering LiDAR scans~\cite{zhang2014loam,xu2021fast,xu2022fast,bai2022faster,shan2020lio,liliom2021}.
The registration is based on the geometry constraints, \textit{i.e.} point-to-plane distance~\cite{bai2022faster,xu2021fast,xu2022fast} (and sometimes point-to-edge distance~\cite{shan2020lio,liliom2021}).
To match each point to its corresponding plane, the nearest-neighbor (NN) search is performed, and the local plane patch is then fitted from \textit{k} (4$\sim$5) NN.
One strategy is to register consecutive scans~\cite{zhang2014loam}, namely, scan-to-scan.
Though fast, this strategy can easily end up with few matched points caused by small FoV or aggressive rotational maneuvers, \textit{e.g.} Fig.~\ref{fig:plane_win_size}d top.
Therefore, recent methods registered the latest LiDAR scan to the local map~\cite{xu2021fast,xu2022fast,bai2022faster,shan2020lio,liliom2021}, which consists of several selected scans transformed to the same coordinate system, thereby providing more points for \textit{k}NN search.
This ensures sufficient matched points and high accuracy.
However, maintaining a large local map, performing \textit{k}NN search and plane fitting for each point sacrifice both memory and efficiency.

In this paper, compared to \textit{lazily} fitting the plane after searching \textit{k}NN for each point, we adopt a more \textit{active} plane pre-fitting manner.
This idea stems from two \textit{un-necessities:}

\textit{1) Unnecessary \textit{k}NN.}
Current LiDAR scan tracking~\cite{zhang2014loam,xu2021fast,xu2022fast,bai2022faster,shan2020lio,liliom2021} search \textit{k}NN for every single point to fit a plane, assuming that spatially close points are from the same plane.
However, the condition is too strict, \textit{i.e.}, spatially distant points can also belong to the same plane, \textit{e.g.} long walls.
In other words, the \textit{k}NN strategy neglects the fact that planes are not always in small local areas.
For large planes, most \textit{k}NN searches are redundant as they turn out to fit the same plane.
Meanwhile, points from successive LiDAR scans often come from different parts of the same large plane, \textit{i.e.}, they share the same high-level large planar object.
This gives us a hint: \textit{If searching happens in the high-level feature plane space, the expensive costs from \textit{k}NN could naturally be saved}.

\textit{2) Unnecessary large local map.}
If preserving only a few scans in the local map, NN might be far away and thus likely to belong to other objects, resulting in the \textit{k}NN being inaccurate to fit the matching plane.
Hence, the local map is enlarged to guarantee sufficient close-enough point-to-\textit{k}NN correspondences~\cite{shan2020lio,liliom2021,qin2020lins}, see Fig.~\ref{fig:plane_win_size}d.
However, we note that: Even with only one scan kept in the local map, the absence of close NN $\neq$ the absence of plane match.
\textit{E.g.}, in Fig.~\ref{fig:plane_win_size}b, most orange wall points failed to find close NN on the long wall, the plane of which, however, can be easily \textit{pre}-fitted from the points within one single previous scan (blue points).
Again, this gives us a hint: \textit{If searching happens in the high-level feature plane space, one single previous scan is enough to provide adequate point-to-plane matches}.

In summary, we make the following contributions:
\begin{itemize}
    \item We propose to represent the scene by its basic skeleton for point matching, which is done via plane pre-fitting (PPF) across LiDAR scans.
    We demonstrate (\textit{\romannumeral1}) PPF can naturally downsize the local map and eliminate most of the redundant \textit{k}NN searches and plane fittings;
    (\textit{\romannumeral2}) the un-robustness of \textit{k}NN to noisy and non-strict planes.

    \item In PPF, planes are not fitted for each point/scan, rather, we leverage IMU and the sequential nature of LiDAR scans to achieve incremental plane updating;
    The iterative PCA makes PPF more robust to noisy and non-strict planes than \textit{k}NN;
    A simple yet effective sandwich layer is introduced to exclude false point-to-plane matches.

    \item Engineering contributions include the fully open-source LIO-PPF.
    Experiments on 22 sequences across 5 open datasets show that, compared to the originals, our PPF reduces the local map size by at most 64\%, achieving 4$\times$ faster in residual calculating, up to 1.92$\times$ overall FPS, and still shows the same level of accuracy.
\end{itemize}

\section{Related Work}\label{sec:related_work}

LOAM~\cite{zhang2014loam} pioneered the field of modern 3D LiDAR odometry and mapping.
It first extracts two kinds of feature points (\textit{i.e.} edge and plane) based on the local curvature (roughness).
Then, the feature cloud is processed in two separate algorithms, which were derived from the NN-based feature-matching scheme described in~\cite{zhang2017low}.
First, the LiDAR odometry took two consecutive scans as input and computed a rough motion in real time.
Second, the LiDAR mapping registered the undistorted points onto the global map at a lower frequency (around 1Hz).
LeGO-LOAM~\cite{shan2018lego} is a lightweight and ground-optimized version of LOAM, which extracted ground points to obtain $t_z$, $\theta_{roll}$ and $\theta_{pitch}$ first.
Since the scan-to-scan matching is restricted by small FoV and hence few correspondences can be matched,~\cite{shan2020lio} introduced the concept of sub-keyframes (\textit{a.k.a.} local map in~\cite{liliom2021,ye2019tightly,qin2020lins}), which consists of several selected LiDAR scans.
The \textit{k}NN search is performed in the local map to guarantee sufficient matches.
Among these tightly-coupled methods, LIO-Mapping~\cite{ye2019tightly} introduced a rotation-constraint refinement to align the final LiDAR pose to the global map;
LIO-SAM~\cite{shan2020lio} is the one with the most impressive mapping results.
~\cite{zhou2021lidar} solved the `double-side issue' of planar objects, and~\cite{zhou2022mathcal} further exploited the lines and cylinders.

Even with parallel computing, \textit{k}NN searching still requires a time-consuming kd-tree building.
Therefore, Fast-LIO2~\cite{xu2022fast} proposed to incrementally build the kd-tree (namely, ikd-tree~\cite{cai2021ikd}), which saves the time of building from scratch each time.
As the follow-up of Fast-LIO2, Faster-LIO~\cite{bai2022faster} regarded the strict \textit{k}NN search as unnecessary in most cases.
Hence, an incremental voxel-based local map (iVox) was designed to be an alternative \textit{k}NN structure to the ikd-tree.

Plane segmentation of 3D point clouds has also been widely studied.
Using region growing~\cite{farid2015region,wu2019accurate} and Hough transform~\cite{hulik2014continuous,leng2016multi} are the most popular, but most are too heavy for the real-time LIO, even with coarse-to-fine refinement~\cite{oehler2011efficient}.
A more efficient faction is RANSAC~\cite{fischler1981random}-based methods.
However, they generally involved points' normal estimation~\cite{khaloo2017robust,yue2018new}, or NDT feature calculation~\cite{li2017improved,xu20193d}, which are still complex operations for LIO.
In this work, our plane pre-fitting algorithm exploits the particularity of LIO, which is efficient in handling sequential LiDAR scans with IMU measurements, with efficiencies up to $\sim$1.5 $\mathrm{ms}$ per scan (64-beam) on modern desktop CPUs.

\section{Notations and Preliminary}\label{sec:preliminary}
Throughout this paper, we denote the world frame as $\mathbf{W}$, the LiDAR frame as $\mathbf{L}$, and the IMU body frame as $\mathbf{B}$.
Let $\mathbf{P}_i$ be the de-skewed LiDAR points received from scan $i$.
A 3D plane can be determined by its normal $\mathbf{n} \in \mathbb{R}^3$ and an arbitrary point $\mathbf{p} \in \mathbb{R}^3$ on it, or more compactly by a 4d vector $\bm{f} = \left[ \mathbf{n}^{\top},d \right]^{\top}$ such that $\bm{f}\cdot \tilde{\mathbf{p}}=0$, where $\tilde{\mathbf{p}} = \left[ \mathbf{p}^{\top},1 \right]^{\top}$.
We represent the 6-DoF pose by a transformation matrix $\mathbf{T} \in SE(3)$, which contains a rotation matrix $\mathbf{R} \in SO(3)$ and a translation vector $\mathbf{t} \in \mathbb{R}^3$.
\begin{theorem}
    The transformation of $\bm{f}$ by $\mathbf{T}$ is given by:
    \begin{equation}\label{eq:plane_trans}
    \bm{f}^{\prime} = \left( \mathbf{T}^{-1} \right)^{\top} \bm{f}.
    \end{equation}
\end{theorem}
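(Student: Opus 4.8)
The plan is to derive the formula directly from the single defining property of the plane coordinates: a point $\mathbf{p}$ lies on the plane $\bm{f}$ exactly when $\bm{f}^{\top}\tilde{\mathbf{p}}=0$. First I would record how points move under $\mathbf{T}$. Since $\mathbf{T}\in SE(3)$ sends the affine point $\mathbf{p}$ to $\mathbf{R}\mathbf{p}+\mathbf{t}$, and this is precisely the top block of $\mathbf{T}\tilde{\mathbf{p}}$, the homogeneous coordinate transforms as $\tilde{\mathbf{p}}^{\prime}=\mathbf{T}\tilde{\mathbf{p}}$. The transformed plane $\bm{f}^{\prime}$ is then characterized by the demand that it contain every transformed point, i.e. $(\bm{f}^{\prime})^{\top}\tilde{\mathbf{p}}^{\prime}=0$ for all $\tilde{\mathbf{p}}$ lying on the original plane.

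Next I would substitute $\tilde{\mathbf{p}}^{\prime}=\mathbf{T}\tilde{\mathbf{p}}$ to obtain the requirement that $(\bm{f}^{\prime})^{\top}\mathbf{T}\,\tilde{\mathbf{p}}=0$ hold whenever $\bm{f}^{\top}\tilde{\mathbf{p}}=0$. The crux is to conclude from this that the row covector $(\bm{f}^{\prime})^{\top}\mathbf{T}$ is a scalar multiple of $\bm{f}^{\top}$. This is the one genuinely non-mechanical step, and I would make it rigorous by a dimension count: the homogeneous lifts $\tilde{\mathbf{p}}$ of the points on a fixed plane span a three-dimensional subspace of $\mathbb{R}^{4}$ (the plane contributes two affine directions and the appended $1$ an independent third), so its annihilator in the dual space is one-dimensional. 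Both $\bm{f}$ and the transpose of $(\bm{f}^{\prime})^{\top}\mathbf{T}$ annihilate this subspace, hence they are proportional, say $(\bm{f}^{\prime})^{\top}\mathbf{T}=\lambda\,\bm{f}^{\top}$. Right-multiplying by $\mathbf{T}^{-1}$ and transposing then gives $\bm{f}^{\prime}=\lambda\,(\mathbf{T}^{-1})^{\top}\bm{f}$.

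Finally I would fix the scale $\lambda=1$ using the unit-normal convention $\|\mathbf{n}\|=1$. Writing $\mathbf{T}=\bigl[\begin{smallmatrix}\mathbf{R}&\mathbf{t}\\ \mathbf{0}^{\top}&1\end{smallmatrix}\bigr]$, a short computation gives $(\mathbf{T}^{-1})^{\top}=\bigl[\begin{smallmatrix}\mathbf{R}&\mathbf{0}\\ -\mathbf{t}^{\top}\mathbf{R}&1\end{smallmatrix}\bigr]$, so the new normal block is $\mathbf{n}^{\prime}=\mathbf{R}\mathbf{n}$. Because $\mathbf{R}\in SO(3)$ is orthogonal, $\|\mathbf{n}^{\prime}\|=\|\mathbf{n}\|=1$, which shows that $(\mathbf{T}^{-1})^{\top}\bm{f}$ already satisfies the normalization and therefore forces $\lambda=1$, yielding \eqref{eq:plane_trans}. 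I expect the proportionality argument to be the main obstacle, since the inverse-transpose block formula and the norm check are routine once it is in place; as a by-product the same block computation also reads off the transformed offset $d^{\prime}=d-\mathbf{n}^{\prime\top}\mathbf{t}$, giving a useful geometric sanity check.
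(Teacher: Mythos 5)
Your proposal is correct, but it runs in the opposite logical direction from the paper and is considerably heavier. The paper's proof is a one-line verification: it takes the candidate $\bm{f}^{\prime}=(\mathbf{T}^{-1})^{\top}\bm{f}$, an arbitrary point $\tilde{\mathbf{p}}$ on $\bm{f}$ with image $\tilde{\mathbf{p}}^{\prime}=\mathbf{T}\tilde{\mathbf{p}}$, and checks $\bm{f}^{\prime\top}\tilde{\mathbf{p}}^{\prime}=\bm{f}^{\top}\mathbf{T}^{-1}\mathbf{T}\tilde{\mathbf{p}}=\bm{f}^{\top}\tilde{\mathbf{p}}=0$, i.e.\ every transformed point lies on the proposed plane --- which already pins the plane down since the image of a plane is a plane. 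You instead \emph{derive} the formula: you impose $(\bm{f}^{\prime})^{\top}\mathbf{T}\tilde{\mathbf{p}}=0$ for all $\tilde{\mathbf{p}}$ on $\bm{f}$, show by the annihilator dimension count that the homogeneous lifts span a $3$-dimensional subspace of $\mathbb{R}^{4}$ so that $(\bm{f}^{\prime})^{\top}\mathbf{T}=\lambda\bm{f}^{\top}$, and then fix the scale. What your route buys is an explicit uniqueness statement (the transformed plane covector is determined up to scale) plus the useful block formulas $\mathbf{n}^{\prime}=\mathbf{R}\mathbf{n}$, $d^{\prime}=d-\mathbf{n}^{\prime\top}\mathbf{t}$, at the cost of machinery the paper does not need, since plane coefficient vectors are only meaningful up to scale anyway. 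Two small caveats: the paper never imposes $\|\mathbf{n}\|=1$, so your normalization step is supplying a convention rather than using one stated in the text; and even granting it, the norm argument only forces $\lambda=\pm 1$, not $\lambda=1$ --- the residual sign is the usual orientation ambiguity of the normal and is harmless here, but you should say so rather than claim $\lambda=1$ outright.
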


\begin{proof}
    Let $\tilde{\mathbf{p}}$ be an arbitrary point on $\bm{f}$, and $\tilde{\mathbf{p}}^{\prime}$ be its transformed counterpart such that $\tilde{\mathbf{p}}^{\prime} = \mathbf{T}\tilde{\mathbf{p}}$.
    Then, we have
    \begin{equation}\label{eq:proof_plane_trans}
    \bm{f}^{\prime} \cdot \tilde{\mathbf{p}}^{\prime}
    = \bm{f}^{\prime \top} \tilde{\mathbf{p}}^{\prime}
    = \bm{f}^{\top} \mathbf{T}^{-1} \mathbf{T} \tilde{\mathbf{p}}
    = \bm{f}^{\top} \tilde{\mathbf{p}}
    = \bm{f} \cdot \tilde{\mathbf{p}}
    = 0,
    \end{equation}
    which means $\tilde{\mathbf{p}}^{\prime}$ is on $\bm{f}^{\prime}$.
\end{proof}

\begin{figure}[htbp]
    \centering\includegraphics[width=\columnwidth]{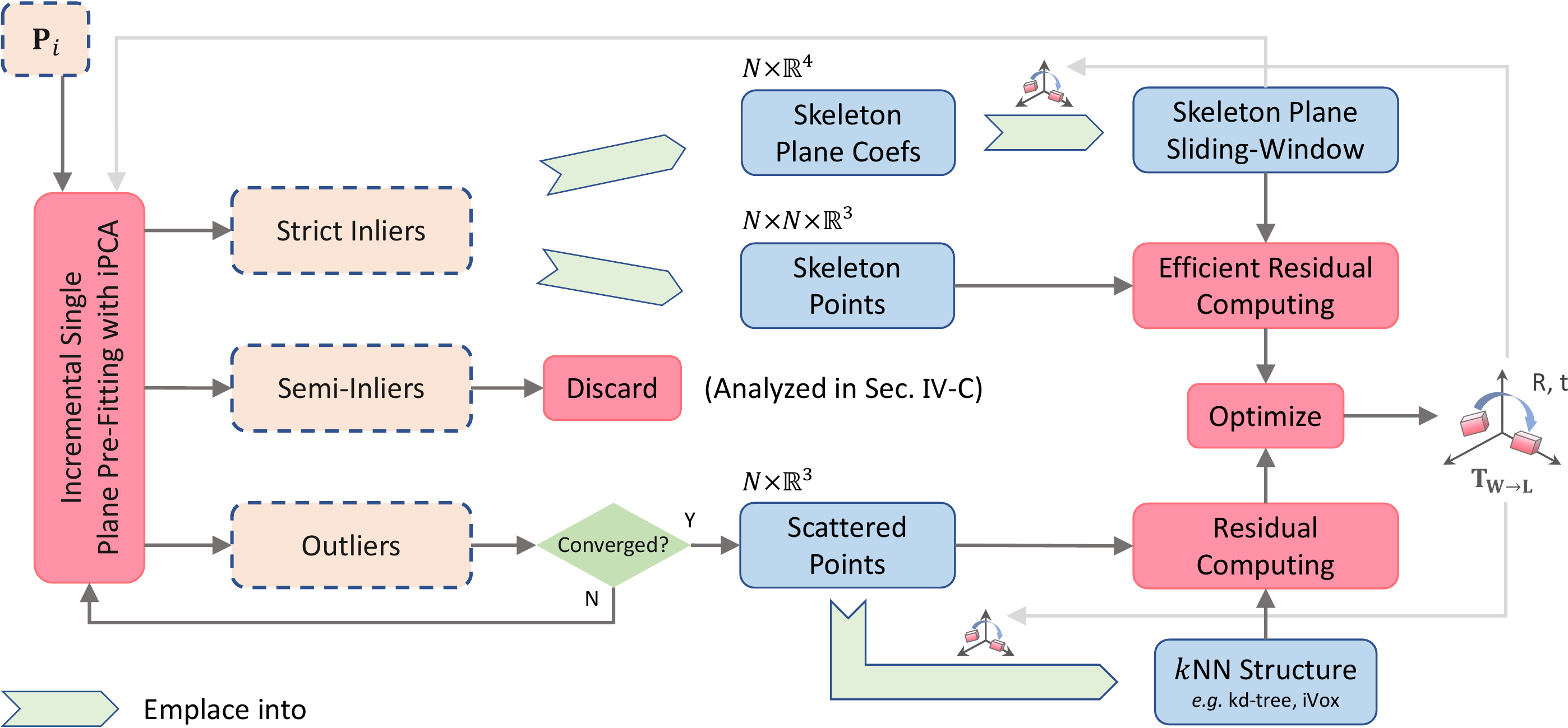}
    \caption{Overview of the plane pre-fitting and tracking pipeline. Shallow arrows \textcolor{shallow_grey}{$\rightarrow$} represent the operations to prepare for the next LiDAR scan.}
    \label{fig:overview}
\end{figure}

\section{Incremental Plane Pre-Fitting and Tracking}\label{sec:methodology}

In this section, we first introduce our efficient plane pre-fitting method.
Then, we describe the corresponding skeleton tracking algorithm based on PPF.
Finally, a sandwich layer is proposed to robustify the algorithm in complex scenes.

\subsection{Incremental iPCA Plane Pre-Fitting}\label{subsec:plane-extraction}
The core of our method can be described from 3 aspects: \textit{Pre-Fitting}, \textit{Iterative PCA} and \textit{Incremental}.

\subsubsection{Pre-Fitting}
The large surfaces across each scan form the basic skeleton of the 3D scene and reveal its global geometry feature.
We refer to them as the skeleton planes, which are independently maintained instead of \textit{lazily} fitted by \textit{k}NN.
We first take input from the extracted surface feature points~\cite{shan2020lio} or the raw de-skewed cloud~\cite{bai2022faster,xu2022fast}.
Then, we adopt an iterative \textit{fit-remove-fit} manner to extract all skeleton planes in $\mathbf{P}_i$.
As for single plane fitting, we use the classic RANSAC~\cite{fischler1981random} and NAPSAC~\cite{torr2002napsac}\footnote[1]{\,Both are integrated into our open-source algorithm.}.
The NAPSAC assumes that samples within the same class are closer to each other than points outside the class.
Hence, samples from inside the same hyper-sphere are more likely to fit the best candidate model, but again, this requires the expensive \textit{k}NN to guarantee.
Thus, in most cases, simply increasing the RANSAC iterations can reach the same level of accuracy.
Once a skeleton plane is fitted, all points are divided into \textit{strict inliers}, \textit{semi-inliers} (Sec.~\ref{subsec:sandwich-layer}) and \textit{outliers} according to their distance to the plane.
All semi-inliers are directly discarded (analyzed later), and the outliers are fed into this process again.
Since the skeleton tracking is effective for large planes, the loop terminates when no more qualified planes can be extracted or the remaining points are too few.\footnote[2]{~Procedures of point classification, semi-inliers removal, \textit{etc.} are carefully engineered with OMP~\cite{dagum1998openmp} parallel computing. See \href{https://github.com/xingyuuchen/LIO-PPF/blob/master/include/sac_model_plane.h}{\color{magenta}{sac\_model\_plane.h}}.}

\subsubsection{Iterative PCA Refining}
Applying PCA to \textit{k}NN dates back to~\cite{zhang2017low}, with the only purpose of determining whether more than 3 points can fit a plane.
For the PPF, it can be more useful.
Due to the noise of LiDAR sampling, IMU noise, bias and its random walk, de-skewed point clouds are noisy.
\begin{figure}[htbp]
    \centerline{\includegraphics[width=\columnwidth]{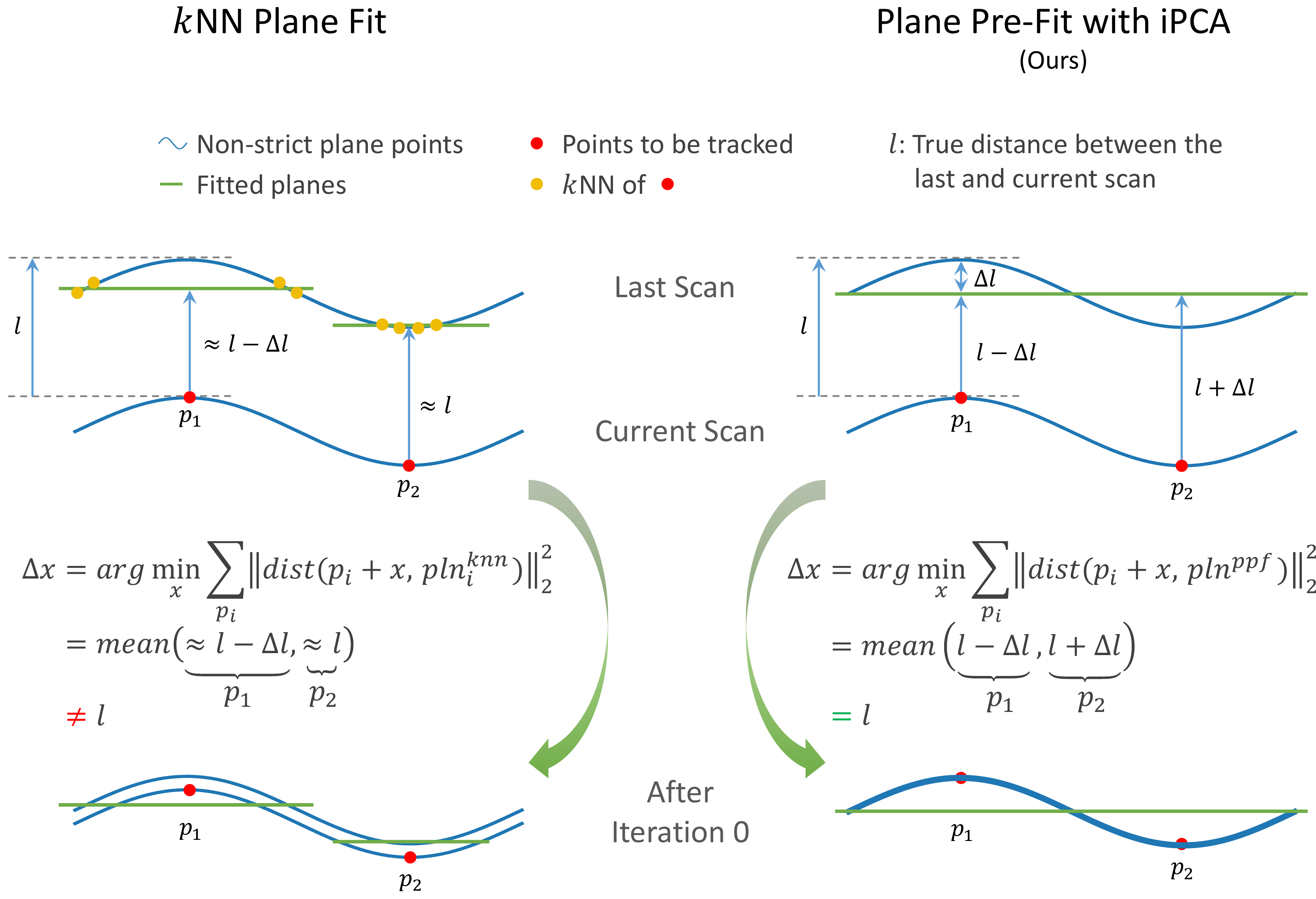}}
    \caption{Illustration of the \textit{k}NN degradation when tracking non-strict planes with small local curvature. The \textit{k}NN strategy minimizes the point-to-\textit{local}-plane distances. However, the local space determined by 4 NN cannot reflect the overall geometric forms. In contrast, the iPCA manner iteratively extracts the principal skeleton of all \textit{global} planar points, resulting in faster convergence. Experimental validation in Sec.~\ref{subsec:simulated-experiment}.}
    \label{fig:compare_knn_plane}
\end{figure}
Even in strict real-world planes (\textit{e.g.} walls), non-negligible offsets between points and the corresponding plane still exist.
On the other hand, some flat surfaces which are not strict planes can also have low local curvature (\textit{e.g.} uneven grounds, small-angle arcs).
Fig.~\ref{fig:compare_knn_plane} demonstrates the issue of \textit{k}NN degradation in these cases.
The \textit{k}NN strategy uses only 4$\sim$5 NN, meaning that the fitted plane is determined by a small local space.
When encountering large noise or non-strict planes, the local space cannot characterize the overall forms, thereby misleading the optimizing objective of the registration (see Fig.~\ref{fig:compare_knn_plane} left).
On the contrary, the \textit{pre}-fitting manner can make use of all planar points' spatial information to produce their global skeleton plane.
In particular, based on the current fitted plane, the eigen-decomposition is performed on the covariance matrix of all its \textit{strict inliers}, and the resulting eigenvector associated with the smallest eigenvalue corresponds to the refined plane, which can be understood as newly voted by \textit{all} the points that strictly support the old plane.
The new strict inliers are computed accordingly and the process is looped 3 times~\cite{Rusu_ICRA2011_PCL}.
We draw a conclusion in Fig.~\ref{fig:compare_knn_plane} that this iterative refinement brings PPF a faster convergence than \textit{k}NN, which is further validated by more detailed comparative experiments in Sec.~\ref{subsec:simulated-experiment}.

\subsubsection{Incremental Fitting}
The geometry structures of adjacent LiDAR scans do not vary a lot, instead, they are generally similar.
Inspired by this property, skeleton planes can be updated \textit{incrementally} so that unnecessary RANSAC iterations will be saved.
We first integrate (discrete-time, first-order hold) the IMU measurements between two consecutive LiDAR scans by iterating:
\begin{figure}[htbp]
    \centerline{\includegraphics[width=\columnwidth]{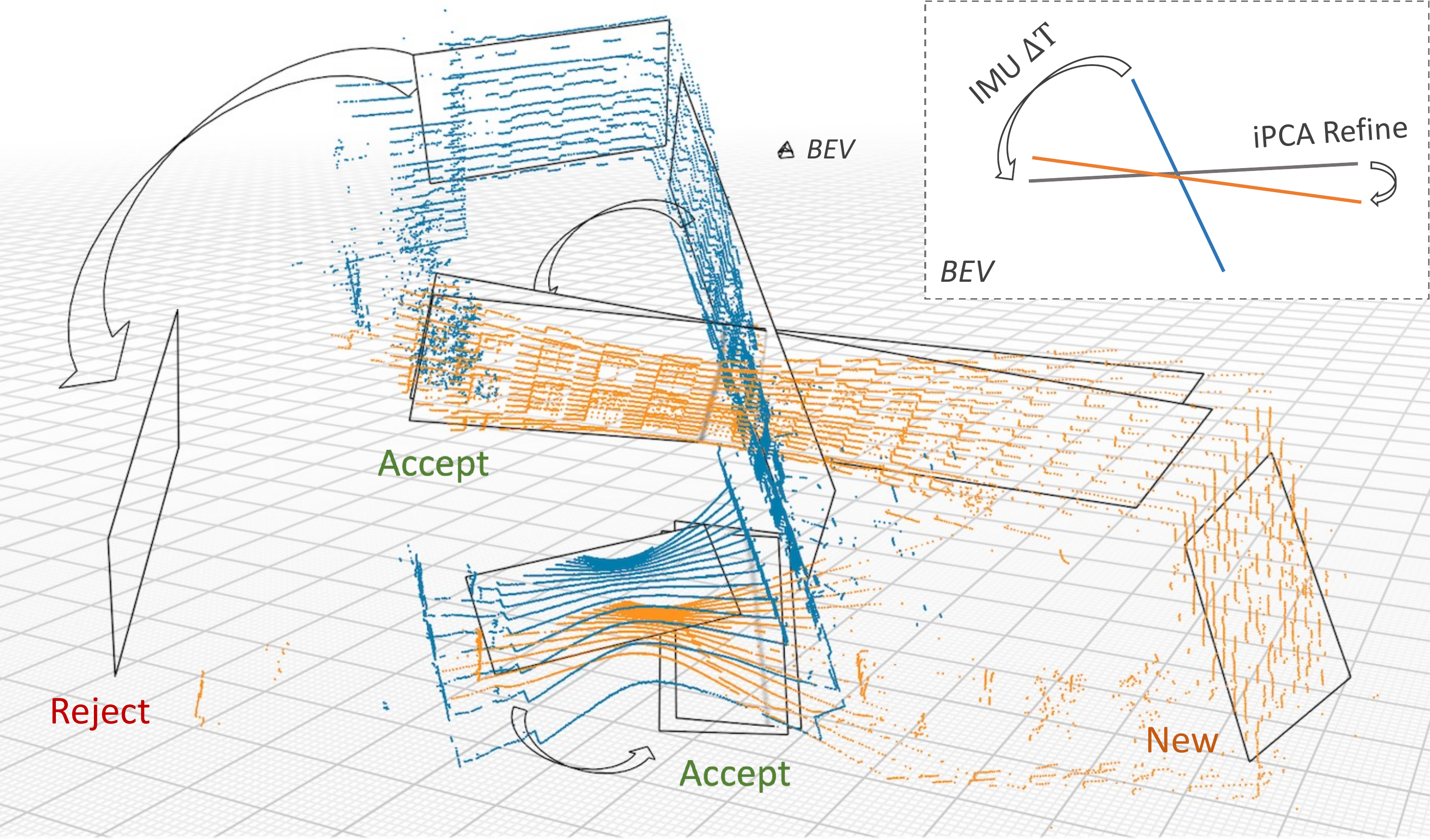}}
    \caption{Illustration of the incremental fitting for the two scans in Fig.~\ref{fig:plane_win_size}b (before scan-matching). \textit{B}ird's-\textit{E}ye-\textit{V}iew on top right. We discard the \textit{initial skeleton guesses} that failed the inliers check. For those accepted, a simple iPCA refinement is performed. Only a few are newly fitted from RANSAC.}
    \label{fig:incremental_fitting}
\end{figure}
\begin{equation}\label{eq:midpoint_w}
    \boldsymbol{\omega} = \frac{1}{2} \left( \left( \hat{\boldsymbol{\omega}}_k - \mathbf{b}^{\boldsymbol{\omega}}_k \right) + \left( \hat{\boldsymbol{\omega}}_{k+1} - \mathbf{b}^{\boldsymbol{\omega}}_k \right) \right)
\end{equation}
\begin{equation}\label{eq:imu_preint_R}
    \mathbf{R}^{\mathbf{B}_i}_{\mathbf{B}_{k+1}} = \mathbf{R}^{\mathbf{B}_i}_{\mathbf{B}_{k}} \mathbf{exp}(\boldsymbol{\omega} \Delta t)
\end{equation}
\begin{equation}\label{eq:midpoint_a}
    \mathbf{a} = \frac{1}{2} \left( \mathbf{R}^{\mathbf{B}_i}_{\mathbf{B}_{k}}\left( \hat{\mathbf{a}}_k - \mathbf{b}^{\mathbf{a}}_k \right) + \mathbf{R}^{\mathbf{B}_i}_{\mathbf{B}_{k+1}}\left( \hat{\mathbf{a}}_{k+1} - \mathbf{b}^{\mathbf{a}}_k \right) \right)
\end{equation}
\begin{equation}\label{eq:imu_preint_alpha}
    \mathbf{p}^{\mathbf{B}_i}_{\mathbf{B}_{k+1}} = \mathbf{p}^{\mathbf{B}_i}_{\mathbf{B}_{k}} + \mathbf{v}^{\mathbf{B}_i}_{\mathbf{B}_{k}} \Delta t + \frac{1}{2}\mathbf{a} \Delta t^2 + \frac{1}{2}\mathbf{R}_{\mathbf{B}_i}^{\mathbf{W} \top}\mathbf{g}\Delta t^2
\end{equation}
\begin{equation}\label{eq:imu_preint_beta}
    \mathbf{v}^{\mathbf{B}_i}_{\mathbf{B}_{k+1}} = \mathbf{v}^{\mathbf{B}_i}_{\mathbf{B}_{k}} + \mathbf{a} \Delta t + \mathbf{R}_{\mathbf{B}_i}^{\mathbf{W} \top}\mathbf{g}\Delta t,
\end{equation}
where $k \in [i,j]$ is the IMU sampling instant between consecutive LiDAR sampling instants $i$ and $j$.
$\Delta t$ is the IMU sampling interval.
$\hat{\boldsymbol{\omega}}_k$ and $\hat{\boldsymbol{a}}_k$ are the raw IMU angular velocity and acceleration measurements at time $k$ in $\mathbf{B}$.
IMU gyroscope bias $\mathbf{b}^{\boldsymbol{\omega}}$ and accelerator bias $\mathbf{b}^{\boldsymbol{a}}$ are modeled as random walk~\cite{qin2018vins}, whose derivatives are Gaussian white noise.
$\mathbf{g}$ is the gravity vector in $\mathbf{W}$.
Please refer to~\cite{forster2016manifold} for detailed derivations of Eqs.~(\ref{eq:midpoint_w}-\ref{eq:imu_preint_beta}).

Next, for every incoming LiDAR scan $\mathbf{P}_j$, each current skeleton plane $\bm{f}_{\mathbf{L}_i}$ is used to obtain a corresponding \textit{initial skeleton guess} $\bm{f}_{init}$ in $\mathbf{P}_j$ by:
\begin{equation}\label{eq:plane_init_guess_trans}
    \bm{f}_{init} = \bm{f}_{\mathbf{L}_j} = \left( \Delta \mathbf{T}^{-1} \right)^{\top} \bm{f}_{\mathbf{L}_i},~where
\end{equation}
\begin{equation}\label{eq:plane_init_guess_trans1}
\Delta \mathbf{T} =
\begin{bmatrix}
    \mathbf{R}^{\mathbf{L}}_{\mathbf{B}} \mathbf{R}^{\mathbf{B}_i}_{\mathbf{B}_{j}} \mathbf{R}^{\mathbf{L}\top}_{\mathbf{B}} & \mathbf{R}^{\mathbf{L}}_{\mathbf{B}} \mathbf{p}^{\mathbf{B}_i}_{\mathbf{B}_{j}} + \mathbf{t}^{\mathbf{L}}_{\mathbf{B}}
    - \mathbf{R}^{\mathbf{L}_i}_{\mathbf{L}_{j}} \mathbf{t}^{\mathbf{L}}_{\mathbf{B}} \\
    \mathbf{0}_{1\times 3} & 1
\end{bmatrix},
\end{equation}
$\mathbf{R}^{\mathbf{L}}_{\mathbf{B}}$ and $\mathbf{t}^{\mathbf{L}}_{\mathbf{B}}$ are the extrinsics between LiDAR and IMU.
Note that the IMU-integrated $\Delta\mathbf{T}$ is not required to be very accurate, since it is only to provide a \textit{guess}.
$\bm{f}_{init}$ is accepted only if it passed the inliers checking.
Once accepted, $\bm{f}_{init}$ will also be iPCA refined for 5 iterations.
After handling the initial guesses, the naive RANSAC begins.
We emphasize that this technique is particularly effective for LiDARs with high beam numbers, \textit{e.g.}, for Velodyne HDL-64E in \texttt{KITTI} dataset, the fitting algorithm can be boosted up to 4$\times$ faster.

\subsection{Skeleton-based Tracking}\label{subsec:scan-matching}
Based on PPF, we describe the efficient residual evaluation criterion.
During the iterative fitting, after refining every skeleton plane, we emplace its \textit{strict inliers} into a set, denoted as $\mathbf{\Omega}$.
When the PPF converges, the remaining points are considered to lie scatteredly.
We denote the set containing the scattered points as $\mathbf{\Theta}$.
Note that the semi-inliers are discarded so we have $\mathbf{\Omega} \cup \mathbf{\Theta} \subseteq \mathbf{P}$.
For each skeleton point $\tilde{\mathbf{p}}$ in $\mathbf{\Omega}$, we search its correspondence in the skeleton plane space $\mathbf{F}$.
$\mathbf{F}$ is implemented as a sliding window of $\bm{f}$ in $\mathbf{W}$.
The residual is measured as point-to-nearest-plane distance:
\begin{equation}\label{eq:residual_large_plane}
    e_{\Omega} \left( \tilde{\mathbf{p}} \right) =
\begin{cases}
    0, ~~~\min_{\bm{f} \in \mathbf{F}} \left\{ \lvert\, \bm{f} \cdot \tilde{\mathbf{p}}  \,\rvert \right\} \geq \epsilon \\
    \min_{\bm{f} \in \mathbf{F}} \left\{ \lvert\, \bm{f} \cdot \tilde{\mathbf{p}} \,\rvert \right\}, ~~~otherwise.
\end{cases}
\end{equation}
While for $\tilde{\mathbf{p}}$ in $\mathbf{\Theta}$, the searching still happens at point level, \textit{i.e.} building the \textit{e.g.} kd-tree or iVox for the local map to search \textit{k} NN points, and the residual is based on the distance between $\tilde{\mathbf{p}}$ and the lazily fitted plane patch $\bm{f}_{knn}$~\cite{xu2022fast,shan2020lio}:
\begin{equation}\label{eq:residual_scatter_points}
e_{\Theta} \left( \tilde{\mathbf{p}} \right) = \lvert\, \bm{f}_{knn} \cdot \tilde{\mathbf{p}} \,\rvert.
\end{equation}

Next, the scan-matching algorithm registers the latest LiDAR scan by solving for the optimal transformation
\begin{equation}\label{eq:point_2_plane_dist}
    \min_{\mathbf{T}_{j}} \Big\{ \sum_{\tilde{\mathbf{p}} \in \mathbf{\Omega}} e_{\Omega} \left( \tilde{\mathbf{p}} \right) + \sum_{\tilde{\mathbf{p}} \in \mathbf{\Theta}} e_{\Theta} \left( \tilde{\mathbf{p}} \right) \Big\},
\end{equation}
using Levenberg-Marquardt~\cite{more1978levenberg,shan2020lio} or IEKF~\cite{xu2022fast}.
Then, in preparation for the next LiDAR scan, we transform all the newly-fitted $\bm{f}_{\mathbf{L}_{j}}$ to $\mathbf{W}$, \textit{i.e.} $\bm{f}_{\mathbf{W}} = \left( \mathbf{T}_{j}^{-1} \right)^{\top} \bm{f}_{\mathbf{L}_{j}}$, and emplace $\bm{f}_{\mathbf{W}}$ into the sliding window $\mathbf{F}$.
Note that although points in $\mathbf{\Omega}$ outnumber $\mathbf{\Theta}$, the searching space of $\mathbf{\Omega}$ (\textit{i.e.} $\mathbf{F}$) is about three orders of magnitude smaller than that of $\mathbf{\Theta}$.

\subsection{Sandwich Layer}\label{subsec:sandwich-layer}

When entering new scenes, there could be no plane match in $\mathbf{F}$ for points on the newly fitted skeleton planes.
Rather, since planes intersect, there may exist false matches if solely relying on point-to-plane distance.
This is more severe for planes with small included angles.
\begin{figure}[htbp]
    \centerline{\includegraphics[width=\columnwidth]{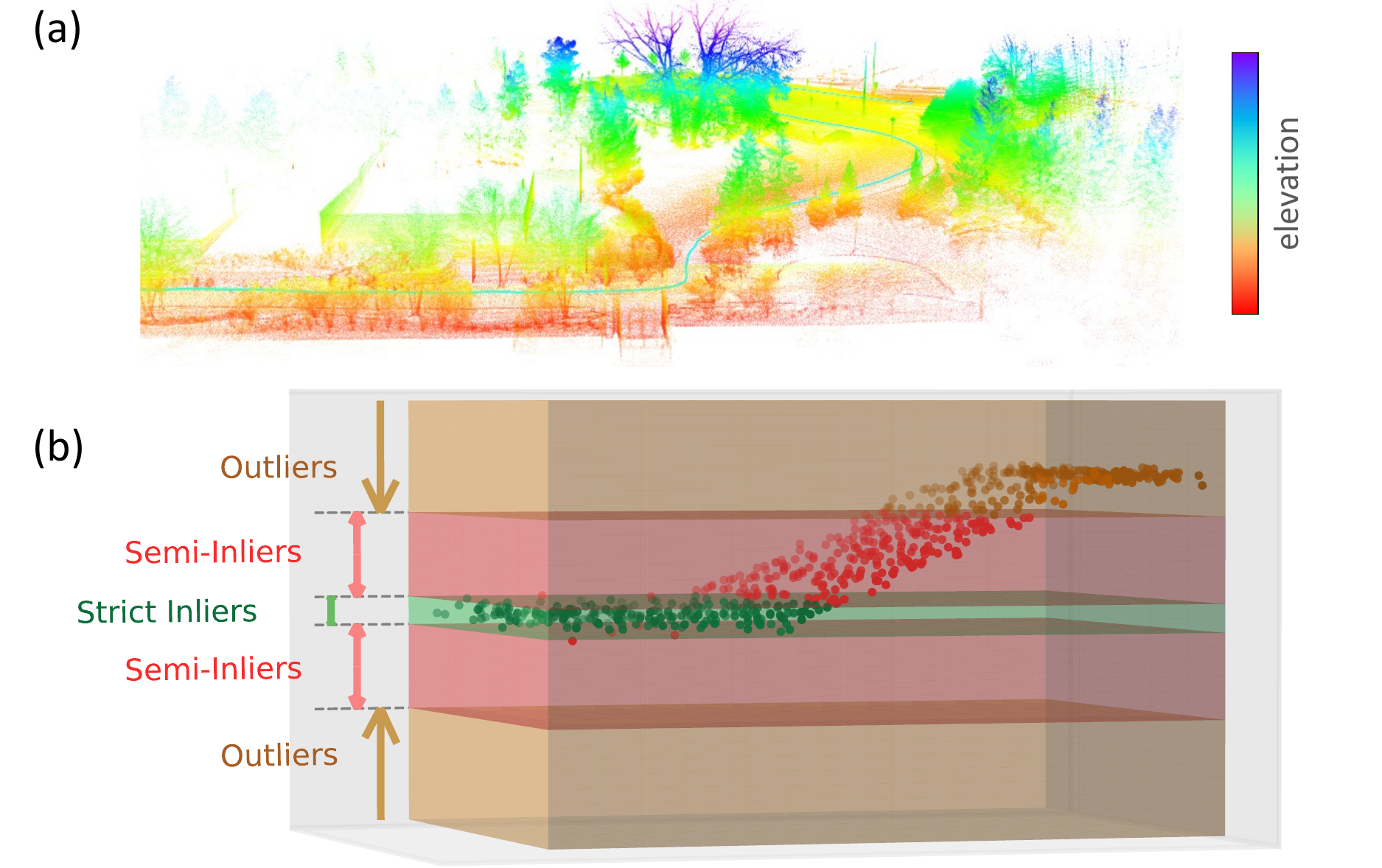}}
    \caption{\textbf{(a)} A scene with grounds of different elevations. \textbf{(b)} Schematic of the slope ground in (a) and the sandwich layer used in PPF. According to the distance to the central plane, points are categorized into three groups: \textit{strict inliers, semi-inliers} and \textit{outliers}.}
    \label{fig:sandwich}
\end{figure}
\textit{E.g.}, the complex terrain in Fig.~\ref{fig:sandwich}a, where the ground elevation slowly rises due to the slope.
Assuming this is the first scan fitting the slope plane such that it is not emplaced into $\mathbf{F}$ yet, points on the slope can easily pass the check in~\eqref{eq:residual_large_plane} and hence wrongly match the ground plane.
To handle this, we introduce an additional \textit{semi-inlier} layer (Fig.~\ref{fig:sandwich}b).
Before the scan-matching, we simply discard all \textit{semi-inlier} points, which may lead to potentially false matches.
The thickness of the \textit{semi-inlier} layer is set to be $\epsilon$ so that \textit{outliers} can hardly produce false matches.
On the other hand, this also helps prevent the noisy `thick' planar points from yielding multi skeleton planes - we only use points strictly \textit{s.t.} the central plane as skeleton.
The removal is so straightforward because the proportion of these ambiguous points is small and their negative impact far outweighs their contributions to the tracking.

\section{Experiments}\label{sec:experiments}

\subsection{Datasets and Experiment Setup}\label{subsec:dataset-and-setting}
All experiments were conducted using a modern desktop computer with Ubuntu 18.04 LTS.
We use a single Intel Core i7-9700K (3.60G Hz \texttimes~8 cores) CPU with 16 GiB memory without using GPU.
The OpenMP~\cite{dagum1998openmp} library is adopted for parallel computing.
For the sake of fair comparisons with previous methods, in all experiments, we launched the same number of threads in parallel as the one to be compared.

We first conducted comparative experiments to analyse the PPF registration strategy and the \textit{k}NN-based one.
Then, we extensively evaluate the accuracy and efficiency of our PPF-based LIO in real-world scenarios.
We test on 5 public datasets, \textit{i.e.} \texttt{UTBM} robocar dataset~\cite{eu_longterm_dataset}, \texttt{KITTI}~\cite{geiger2012we}, \texttt{LIOSAM}~\cite{shan2020lio}, \texttt{ULHK}~\cite{wen2020urbanloco} and \texttt{NCLT}~\cite{ncarlevaris-2015a}, including 22 sequences in total.
See Appendix for benchmark details.

\subsection{Comparative Study of PPF and \textit{k}NN}\label{subsec:simulated-experiment}
In this section, we verify the conclusion drawn in Fig.~\ref{fig:compare_knn_plane}: \textit{The PPF takes fewer iterations to converge than \textit{k}NN, and costs at least an order of magnitude less time in every single iteration}.

We generate planes with Gaussian white noise, which, in the real world, may come from LiDAR sampling, cloud de-skewing with noisy IMU measurements, etc.
To simulate the agent's ego-motion, a distance is set between the two planes.
The goal is to register these two scans such that different scans of the same object can overlap in the mapping.
\begin{figure}[h]
    \centering\includegraphics[width=\columnwidth]{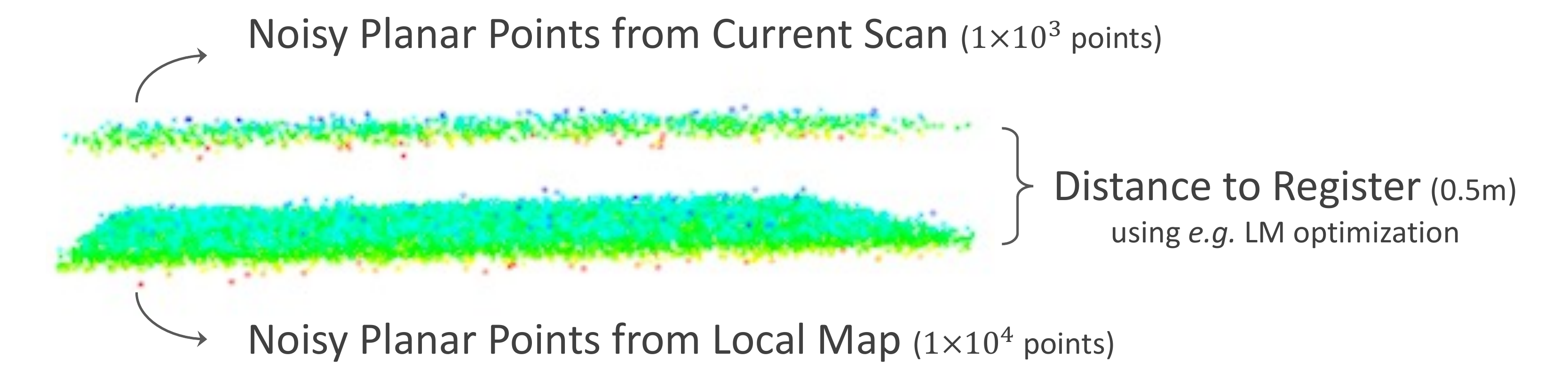}
    \caption{Simulated experiment setup. We generate planes with Gaussian white noise of varying $stddev$. The registration is done with different point matching strategies using the Levenberg-Marquardt algorithm~\cite{more1978levenberg}.}
    \label{fig:sim_plane}
\end{figure}

Results are shown in Fig.~\ref{fig:sim_exp_res}.
We first plot the distance between two planes \textit{w.r.t.} the number of iterations in (a).
For the PPF strategy, regardless of the Gaussian noise $stddev$, the registrations are almost done only after the first iteration (\textit{i.e.} distance\,$\approx$\,$0\mathrm{m}$, overlap).
While for the \textit{k}NN strategy, as the noise grows, the registrations of the first few iterations perform poorly.
This is because with the increase of noise $std$, the 4$\sim$5 NN become unstable and hence the fitted plane gradually deviates from the central plane, thereby misleading the optimizing objective.
Note that applying PCA refinement on \textit{k}NN-fitted planes can hardly reduce the negative effects of noise, because the samples for fitting (\textit{k}) are too few, resulting in the principal component being sensitive to any slight deviations caused by the noise.
Next, we show the number of iterations to converge\footnote[3]{~The convergence criterion is the same as that of LIO-SAM~\cite{shan2020lio}.} \textit{w.r.t.} the noise $std$ in (b).
\begin{figure}[h]
    \centering\includegraphics[width=\columnwidth]{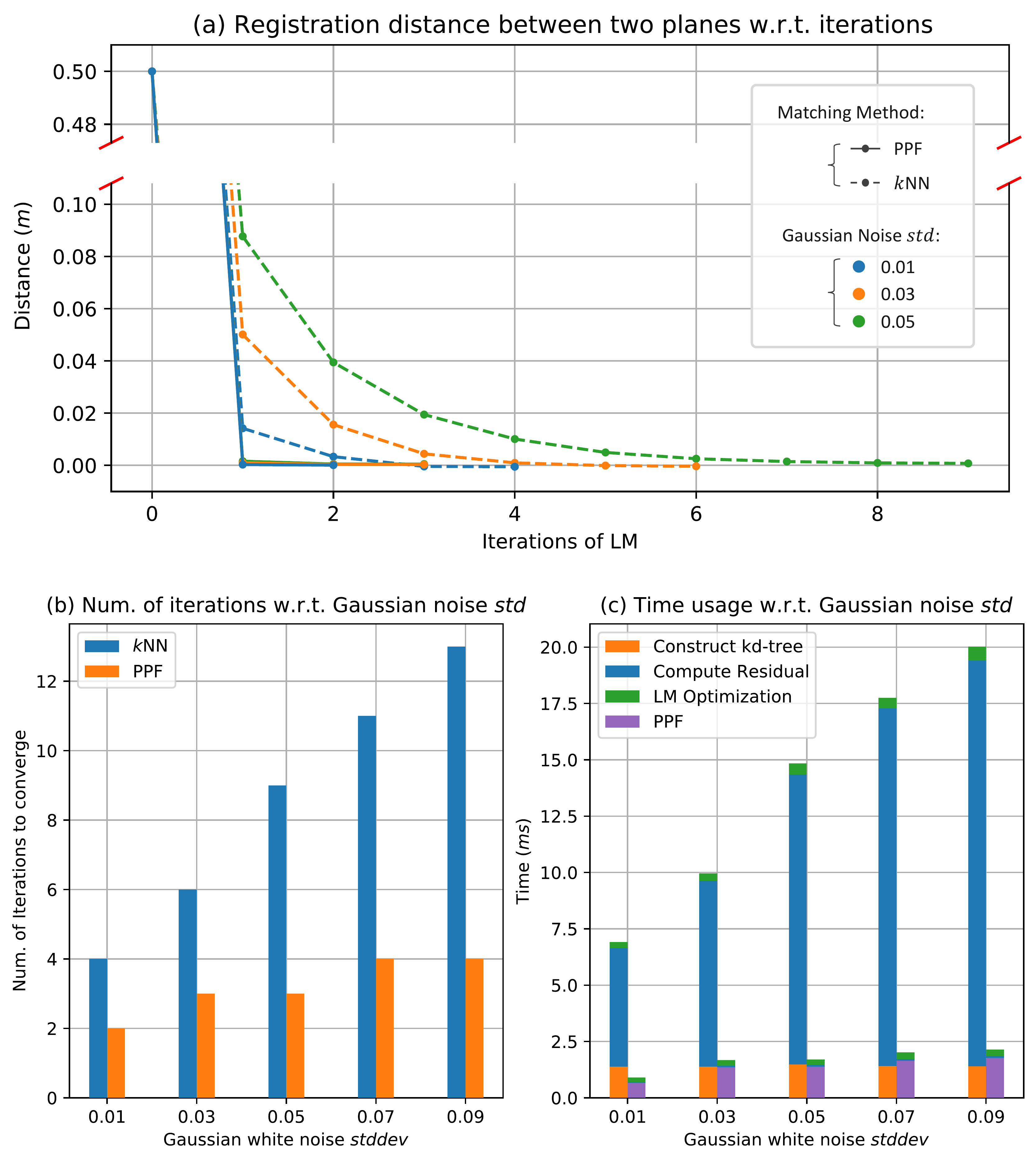}
    \caption{Results of the simulation experiments. \textbf{(c)} left bar: \textit{k}NN; right bar: PPF. All experiments run in a single thread.}
    \label{fig:sim_exp_res}
\end{figure}
Generally, PPF is more robust to noise, and the number of iterations needed increases at a much lower rate.
We further compare the detailed runtime of the two strategies in (c).
The overall efficiency of PPF is significantly higher.
The \textit{k}NN strategy spends most of the time in residual computation, which includes the expensive \textit{k}NN search and plane fitting for each point.
Again, these operations are redundant and can be saved if searching happens at the plane level.
As a result, the residual computing time of PPF is at least one order of magnitude less than that in \textit{k}NN.
From the trend of the overall runtime \textit{w.r.t.} noise, \textit{k}NN grows at a much faster speed, whereas the PPF keeps at a stable level.
The time growth of \textit{k}NN stems mainly from the number of iterations, and the growth of PPF comes only from pre-fitting.
This is because the iPCA refinement takes more time to converge when the noise is large.
However, once the skeleton plane is determined, the LM optimization and residual calculation are no longer affected by the noise of point clouds.

\begin{table*}[htb]
    \centering
    \caption{Time ($\mathrm{ms}$) and Memory Consuming of Each Step in LIO-SAM~\cite{shan2020lio} Using \textit{k}NN or PPF}
    \begin{threeparttable}
        \begin{tabular}{r | cc | cc | cc | cc | cc || cc || cc}
            \toprule
            \multirow{2}{*}{ Dataset } & \multicolumn{2}{c}{ PPF } & \multicolumn{2}{c}{ Build Local Map } & \multicolumn{2}{c}{ Build kd-tree } & \multicolumn{2}{c}{ Scan Match } & \multicolumn{2}{c}{ Calc. Residual } & \multicolumn{2}{c}{ Total time$^1$ } & \multicolumn{2}{c}{ Local Map Size$^2$ } \\
            & \textit{k}NN & PPF & \textit{k}NN & PPF & \textit{k}NN & PPF & \textit{k}NN & PPF & \textit{k}NN & PPF & \textit{k}NN & PPF & \textit{k}NN & PPF \\
            \midrule
            \texttt{LIOSAM-3} & \textbf{0} & 2.64 & 21.28 & \textbf{10.48} & 8.43 & \textbf{5.51} & 19.97 & \textbf{13.84} & 1.50 & \textbf{0.69} & 45.12 & \textbf{30.80} & 66.9k & \textbf{44.1k} \\
            \texttt{LIOSAM-4} & \textbf{0} & 3.16 & 28.12 & \textbf{13.65} & 9.56 & \textbf{6.25} & 21.99 & \textbf{15.96} & 1.66 & \textbf{0.76} & 54.36 & \textbf{36.98} & 75.8k & \textbf{49.9k} \\
            \texttt{LIOSAM-5} & \textbf{0} & 3.07 & 18.20 & \textbf{9.97} & 8.55 & \textbf{6.05} & 22.03 & \textbf{17.51} & 2.01 & \textbf{0.97} & 44.37 & \textbf{34.69} & 67.9k & \textbf{48.3k} \\
            \texttt{LIOSAM-2} & \textbf{0} & 3.61 & 17.69 & \textbf{10.33} & 8.88 & \textbf{6.36} & 23.36 & \textbf{18.55} & 2.11 & \textbf{1.14} & 44.95 & \textbf{36.39} & 70.4k & \textbf{50.5k} \\
            \texttt{KITTI-7} & \textbf{0} & 1.30$^\dag$ & 55.08 & \textbf{21.56} & 8.20 & \textbf{3.72} & 42.93 & \textbf{19.47} & 3.95 & \textbf{0.83} & 112.66 & \textbf{58.73} & 67.6k & \textbf{29.4k} \\
            \texttt{KITTI-2} & \textbf{0} & 1.96$^\dag$ & 36.76 & \textbf{13.95} & 5.19 & \textbf{2.61} & 35.63 & \textbf{16.99} & 3.80 & \textbf{0.99} & 91.25 & \textbf{51.77} & 41.6k & \textbf{15.7k} \\
            \texttt{KITTI-3} & \textbf{0} & 1.50$^\dag$ & 44.55 & \textbf{24.30} & 4.30 & \textbf{2.75} & 28.96 & \textbf{17.48} & 2.49 & \textbf{0.78} & 93.05 & \textbf{62.82} & 37.0k & \textbf{19.5k} \\
            \texttt{KITTI-4} & \textbf{0} & 1.25$^\dag$ & 45.86 & \textbf{21.92} & 4.37 & \textbf{2.32} & 28.47 & \textbf{15.23} & 2.95 & \textbf{0.73} & 93.96 & \textbf{58.02} & 37.3k & \textbf{17.0k} \\
            \texttt{KITTI-6} & \textbf{0} & 1.48$^\dag$ & 43.29 & \textbf{17.32} & 5.48 & \textbf{2.34} & 30.96 & \textbf{15.15} & 4.12 & \textbf{0.90} & 93.23 & \textbf{52.93} & 44.3k & \textbf{15.9k} \\
            \texttt{KITTI-5} & \textbf{0} & 1.22$^\dag$ & 55.24 & \textbf{23.11} & 7.31 & \textbf{3.44} & 43.17 & \textbf{21.10} & 3.52 & \textbf{0.73} & 114.89 & \textbf{64.43} & 60.8k & \textbf{26.6k} \\
            \texttt{KITTI-1} & \textbf{0} & 1.29$^\dag$ & 45.04 & \textbf{22.79} & 4.70 & \textbf{2.69} & 32.25 & \textbf{17.98} & 3.07 & \textbf{0.80} & 97.18 & \textbf{61.96} & 40.3k & \textbf{19.8k} \\

            \bottomrule
        \end{tabular}
        \vspace{0.03cm}
        \begin{tablenotes}
            \scriptsize
            \item[$^1$] The total time includes some common steps \textit{e.g.} de-skewing (not listed) whose timings are unchanged - we only tabulate steps which are sped up.
            \item[$^2$] The local map denotes the one used in the scan-matching algorithm, not the reconstructed map.
            We only decrease the memory usage of LIO algorithms but do not decrease the quality of the mapping.
            The size is measured by the number of points.
            \item[$^\dag$] The \textit{incremental} fitting is enabled for \texttt{KITTI} dataset, to show the benefits for handling the large number of points obtained by Velodyne HDL-64E Laser scanner.
        \end{tablenotes}
    \end{threeparttable}

    \label{tab:time_liosam}
\end{table*}

\begin{table}[h]
    \scriptsize
    \centering
    \caption{Accuracy Comparison of LIO-SAM~\cite{shan2020lio} Using \textit{k}NN and PPF}
    \begin{threeparttable}
        \begin{tabular}{lcccc}
            \toprule
            \multirow{2}{*}{ Dataset } & \multicolumn{2}{c}{ LIO-SAM } & \multicolumn{2}{c}{ LIO-SAM \textcolor{aug_our_method_color}{w/ PPF} } \\
            & APE $(\mathrm{m})$ & RPE $(\mathrm{m})$ & APE $(\mathrm{m})$ & RPE $(\mathrm{m})$ \\
            \midrule
            \texttt{LIOSAM-1} & 0.739 & 6.928 & \textbf{0.719} & \textbf{6.742} \\
            \texttt{LIOSAM-2} & 9.785 & 14.387 & \textbf{8.364} & \textbf{13.353} \\
            \texttt{LIOSAM-5} & 1.365 & \textbf{2.620} & \textbf{1.280} & 2.624 \\
            \texttt{KITTI-1}$^\dag$ & 2.039 & 0.105 & \textbf{2.026} & \textbf{0.101} \\
            \texttt{KITTI-2}$^\dag$ & 1.551 & 1.336 & \textbf{1.547} & \textbf{1.335} \\
            \texttt{KITTI-3}$^\dag$ & \textbf{1.209} & 0.344 & 1.398 & \textbf{0.332} \\
            \texttt{KITTI-4}$^\dag$ & 1.268 & \textbf{0.152} & \textbf{0.960} & 0.162 \\
            \texttt{KITTI-6}$^\dag$ & 0.342 & 0.055 & \textbf{0.328} & \textbf{0.048} \\
            \texttt{KITTI-7}$^\dag$$^*$ & 4.463 & 1.471 & \textbf{3.147} & \textbf{1.461} \\

            \bottomrule
        \end{tabular}
        \begin{tablenotes}
            \scriptsize
            \item[$^{\mathrm{*}}$] Since LIO-SAM requires high-frequency IMU, the Rosbag of KITTI Odom is generated from the corresponding KITTI Raw data.
        \end{tablenotes}
    \end{threeparttable}
    \label{tab:performance_liosam}
\end{table}

\begin{figure}[hbt]
    \centering\includegraphics[width=\columnwidth]{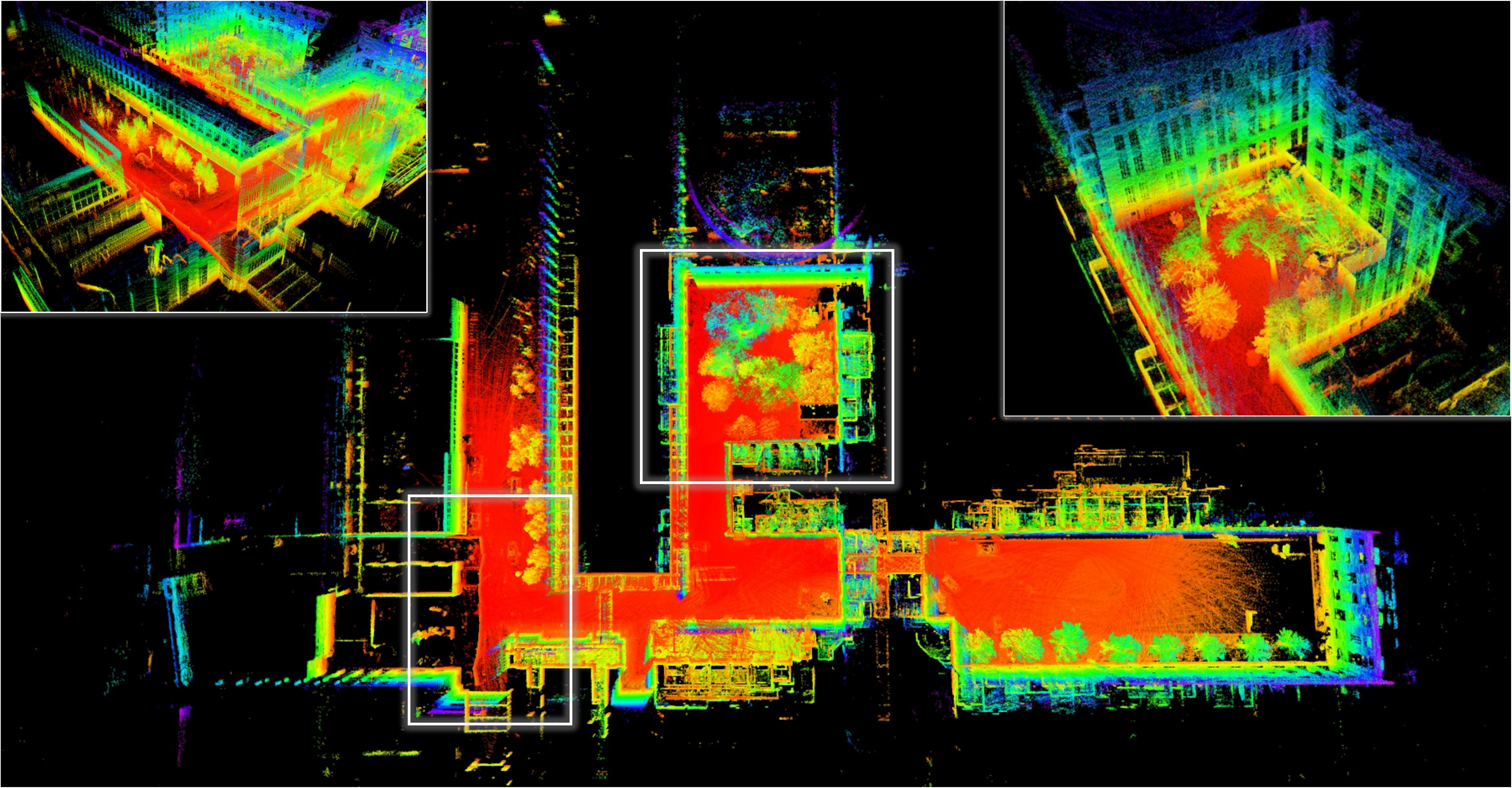}
    \caption{Mapping results of the MIT campus using \texttt{LIOSAM-3} sequence.}
    \label{fig:map_garden}
\end{figure}
\subsection{Efficiency Evaluation}\label{subsec:efficiency_evaluation}
LIO systems' efficiencies are evaluated by analyzing the runtime of each step.
In order to investigate how LIO systems can benefit from the PPF, we first redevelop LIO-SAM~\cite{shan2020lio} and compare the efficiency using \textit{k}NN and PPF.
We tabulate the detailed time cost of each step in Tab.~\ref{tab:time_liosam}.
In the \texttt{LIOSAM} datasets, the overall efficiency (FPS, $\frac{1}{total\:time}$) is increased to 140\%\,$\sim$\,150\%.
In the \texttt{park} sequence, the basic skeleton is not clear due to the large amount of vegetation (tree leaves, grass), hence the FPS is only increased to 128\%.
Specifically, since skeleton points are all extracted out and tracked independently, the efficiencies of local map construction and kd-tree building are greatly boosted, with only an extra cost of $3\,\mathrm{ms}$ from pre-fitting.
In terms of the searching space, the basic skeleton is stored in a sliding window $\mathbf{F}$ with at most 5 plane coef $\in \mathbb{R}^4$, which is much smaller than the local point map, and the expensive \textit{k}NN search is totally abandoned here.
Consequently, time to calculate residuals is reduced by more than half.
To better show the benefit of PPF, we disable the point cloud down-sampling in \texttt{KITTI} dataset, and make use of all points available.
This also helps LIO to reconstruct more fine-grained 3D maps.
The overall FPS is increased more significantly in \texttt{KITTI}, which is up to 1.92$\times$.
Note that there are even two sequences that cannot run in real time without PPF.
The residual calculation's efficiency can also be increased to more than 4$\times$ faster.
The local map memory usage is reduced to a minimum of 36\%.
Meanwhile, because of the heavy Velodyne HDL-64E Laser scanner of \texttt{KITTI}, we enable the \textit{incremental} fitting, which further decreases the time of PPF from $\sim6\,\mathrm{ms}$ (not tabulated) to $\sim1.5\,\mathrm{ms}$.

\begin{figure}[hbt]
    \centering\includegraphics[width=\columnwidth]{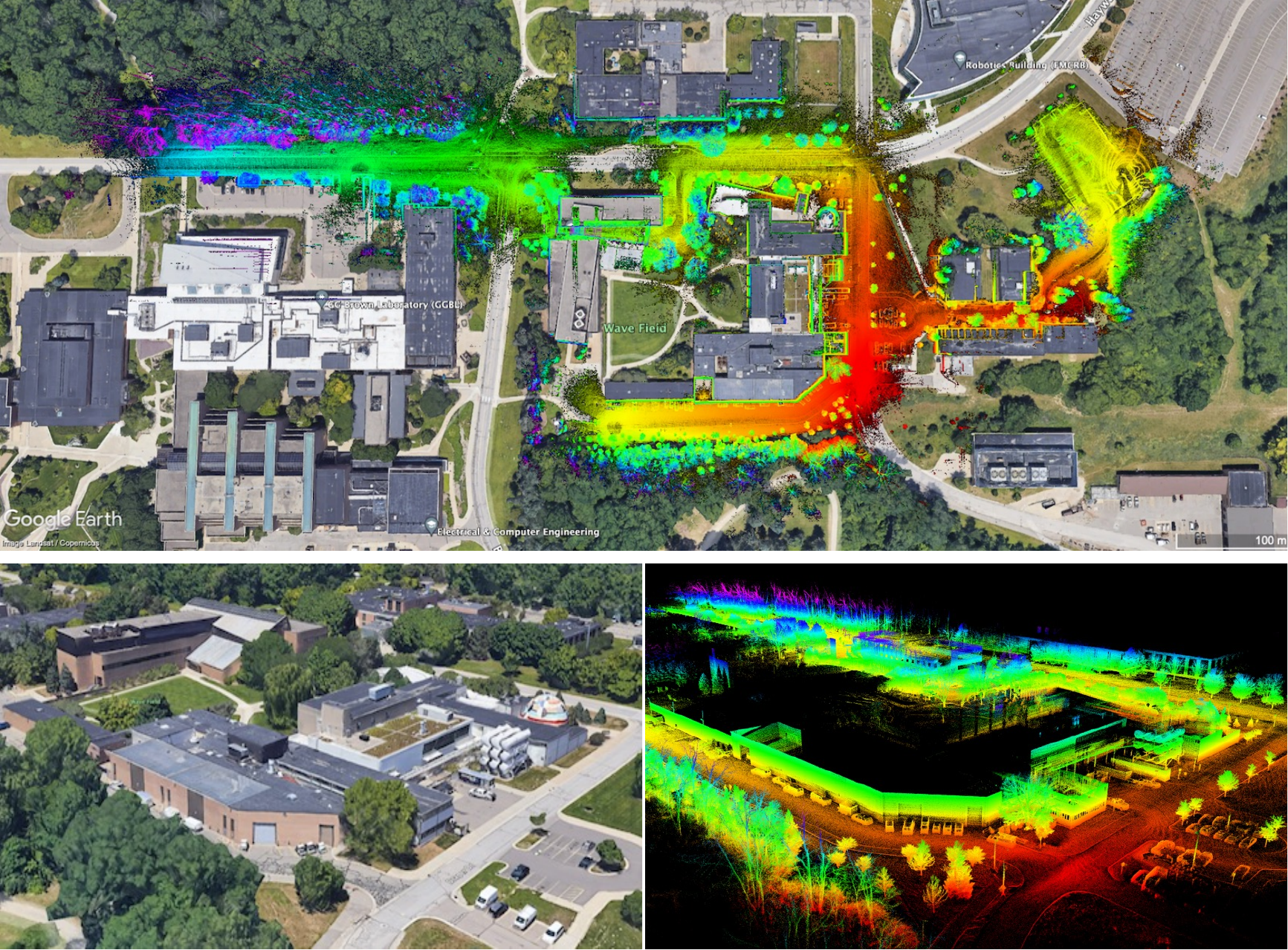}
    \caption{Estimated trajectory and reconstructed maps of a large-scale scene in the University of Michigan North Campus, aligned with Google Earth.}
    \label{fig:map_nclt_20130110}
\end{figure}

\subsection{Accuracy Evaluation}\label{subsec:accuracy-evaluation}
The accuracy is measured by root mean square error (RMSE) of absolute pose error (APE) metric and relative pose error (RPE) metric, which were calculated by \textit{evo}~\cite{grupp2017evo} in our experiments.
We first compare the accuracy of our algorithm with LIO-SAM, as reported in Tab.~\ref{tab:performance_liosam}.
In \texttt{LIOSAM} dataset, 3 sequences are with GPS measurements, which are treated as ground truth trajectories.
Since LIO-SAM requires high-frequency IMU measurements, all \texttt{KITTI} sequences are generated from KITTI-RAW dataset using $kitti\_to\_bag$\footnote[4]{~\url{https://github.com/tomas789/kitti2bag}} toolkit.
We also show the accuracy results of integrating PPF into other state-of-the-art LIO systems, \textit{e.g.} Faster-LIO~\cite{bai2022faster}.
The accuracies of 4 systems are compared in 9 sequences in Tab.~\ref{tab:performance_fastlio}.
Since LIO-SAM requires a 9-axis IMU, which is not available in the \texttt{UTBM} dataset, the results are not shown.
Some sequences provide the ground truth poses at a low frequency (\textit{e.g.} \texttt{1\,hz} in \texttt{ULHK}), therefore, we adjust the \textit{t\_max\_diff} parameter of \textit{evo} to increase the maximum timestamp difference for data association.
As can be seen, with \textit{k}NN replaced by PPF, both LIO-SAM and Faster-LIO can reach a generally comparable accuracy to their original counterparts, with less time and memory usage.
Visualizations of estimated trajectories are compared in Fig.~\ref{fig:evo_traj}, which also shows that the ones with PPF keep the same level of accuracy in most sequences (sometimes even more visually accurate, \textit{e.g.} the \texttt{KITTI-7} sequence).

\begin{table*}[htb]
    \scriptsize
    \centering
    \caption{Accuracy Comparison of Existing State-of-the-Art Methods}
    \begin{tabular}{lccccccccc}
        \toprule
        \multirow{2}{*}{ Dataset } & \multicolumn{2}{c}{ Faster-LIO~\cite{bai2022faster} } & \multicolumn{2}{c}{ Faster-LIO \textcolor{aug_our_method_color}{w/ PPF} } & \multicolumn{2}{c}{ LIO-SAM~\cite{shan2020lio} } & \multicolumn{2}{c}{ LiLi-OM~\cite{liliom2021} } & \multirow{2}{*}{ Distance (km) } \\
        & APE $(\mathrm{m})$ & RPE $(\%)$ & APE $(\mathrm{m})$ & RPE $(\%)$ & APE $(\mathrm{m})$ & RPE $(\%)$ & APE $(\mathrm{m})$ & RPE $(\%)$ &  \\
        \midrule
        \texttt{NCLT-1} & \textbf{1.239} & \textbf{0.157} & 1.365 & \textbf{0.157} & 10.095 & 1.307 & - & - & 1.14 \\
        \texttt{NCLT-2} & 1.478 & \textbf{0.153} & \textbf{1.230} & \textbf{0.153} & 26.351 & 1.309 & - & - & 3.19 \\
        \texttt{UTBM-1} & 26.519 & 1.806 & 27.470 & \textbf{1.800} & - & - & \textbf{14.950} & 5.669 & 6.40 \\
        \texttt{UTBM-6} & 15.989 & 0.876 & 16.857 & \textbf{0.866} & - & - & \textbf{12.954} & 4.124 & 5.14 \\
        \texttt{UTBM-3} & \textbf{15.510} & 0.482 & 16.502 & \textbf{0.474} & - & - & 15.989 & 2.869 & 4.98 \\
        \texttt{UTBM-2} & 15.002 & \textbf{1.983} & \textbf{14.996} & 1.985 & - & - & 24.466 & 12.051 & 4.99 \\
        \texttt{UTBM-4} & 14.560 & 2.036 & 15.701 & \textbf{2.034} & - & - & \textbf{13.654} & 12.735 & 4.99 \\
        \texttt{ULHK-1} & \textbf{0.972} & 0.915 & 1.038 & \textbf{0.631} & 1.503 & 0.931 & \textbf{0.964} & 0.635 & 0.60 \\
        \texttt{ULHK-2} & \textbf{0.888} & 0.443 & 0.897 & 0.421 & 0.926 & 0.465 & 0.893 & \textbf{0.417} & 0.62 \\

        \bottomrule
    \end{tabular}
    \label{tab:performance_fastlio}
\end{table*}

\begin{table}[h]
    \scriptsize
    \centering
    \caption{Runtime ($\mathrm{ms}$) Comparison of Fast-LIO2~\cite{xu2022fast} Using \textit{k}NN and PPF}
    \begin{threeparttable}
        \begin{tabular}{lcccc}
            \Xhline{0.8pt}
            Dataset & \begin{tabular}[c]{@{}c@{}}Calc.\\ Residual \end{tabular} & \begin{tabular}[c]{@{}c@{}}Map \\Incremental \end{tabular} &
            IEKF
            & \begin{tabular}[c]{@{}c@{}}Map Size$^1$ \end{tabular} \\

            \hline \texttt{NCLT-2} & 2.79 & 1.52 & 12.67 & 3.37k \\
            \textcolor{aug_our_method_color}{w/ PPF} & \textbf{1.90} & \textbf{1.26} & \textbf{8.73} & \textbf{2.33k}  \\

            \hline \texttt{NCLT-1} & 2.54 & 1.56 & 11.66 & 2.74k \\
            \textcolor{aug_our_method_color}{w/ PPF} & \textbf{1.72} & \textbf{1.28} & \textbf{7.98} & \textbf{1.89k}  \\

            \hline \texttt{UTBM-5} & 2.90 & 3.66 & 13.86 & 3.29k \\
            \textcolor{aug_our_method_color}{w/ PPF} & \textbf{1.57} & \textbf{2.62} & \textbf{7.95} & \textbf{1.81k}  \\

            \hline \texttt{UTBM-2} & 2.88 & 3.34 & 13.49 & 3.34k \\
            \textcolor{aug_our_method_color}{w/ PPF} & \textbf{1.83} & \textbf{2.64} & \textbf{9.06}  &  \textbf{2.14k}  \\

            \hline \texttt{UTBM-4} & 2.68 & 3.16 & 12.64 & 3.30k \\
            \textcolor{aug_our_method_color}{w/ PPF} & \textbf{1.81} & \textbf{2.58} & \textbf{8.87} & \textbf{2.14k}  \\

            \hline \texttt{UTBM-3} & 2.74 & 3.39 & 12.97 & 3.29k \\
            \textcolor{aug_our_method_color}{w/ PPF} & \textbf{1.76} & \textbf{2.61} & \textbf{8.71}  &  \textbf{2.08k}  \\

            \hline \texttt{UTBM-6} & 2.80 & 3.27 & 13.33 & 3.31k \\
            \textcolor{aug_our_method_color}{w/ PPF} & \textbf{1.70} & \textbf{2.45} & \textbf{8.57} & \textbf{2.07k}  \\

            \hline \texttt{ULHK-1} & 2.87 & 2.08 & 13.64 & 2.79k \\
            \textcolor{aug_our_method_color}{w/ PPF} & \textbf{1.37}  &  \textbf{1.47} & \textbf{7.39} & \textbf{1.27k}  \\

            \hline \texttt{ULHK-2} & 2.96 & 2.01 & 14.03 & 2.83k \\
            \textcolor{aug_our_method_color}{w/ PPF} & \textbf{1.44}  &  \textbf{1.45} & \textbf{7.65} & \textbf{1.30k}  \\
            \Xhline{0.8pt}
        \end{tabular}
        \vspace{0.03cm}
        \begin{tablenotes}
            \scriptsize
            \item[$^{\mathrm{1}}$] Map memory is measured by the average number of new points in each scan that need to be maintained by the ikd-tree.
        \end{tablenotes}
    \end{threeparttable}
    \label{tab:time_fastlio}
\end{table}

\begin{figure}[h]
    \centering\includegraphics[width=\columnwidth]{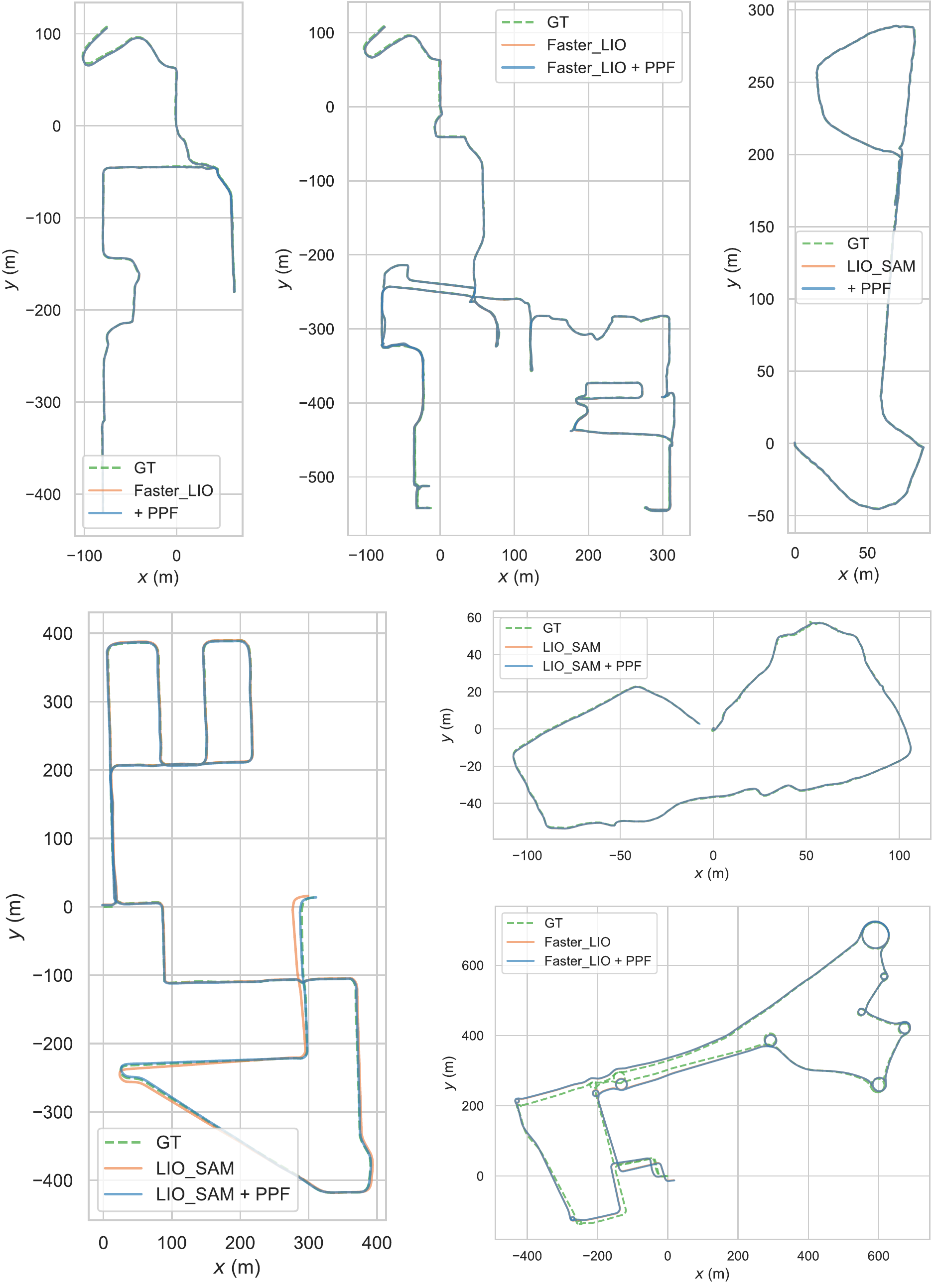}
    \caption{The estimated and GT trajctories. From top left to bottom right: \texttt{NCLT-1}, \texttt{NCLT-2}, \texttt{LIOSAM-5}, \texttt{KITTI-7}, \texttt{LIOSAM-1} and \texttt{UTBM-5}.}
    \label{fig:evo_traj}
\end{figure}

\section{Conclusion}\label{sec:conclusion}

We investigate the computation of LiDAR-Inertial odometry systems, and reveal the redundancy of most \textit{k}NN searches, plane fittings and the un-necessity of the large local map.
To achieve more efficient LiDAR scan tracking, we design a plane pre-fitting pipeline to achieve the basic-skeleton tracking.
We demonstrate the un-robustness of \textit{k}NN to noisy and non-strict planes, and introduce iPCA to PPF to handle this.
Moreover, skeleton planes can be incrementally updated by leveraging IMU measurements and the sequential nature of LiDAR scans to further speed up PPF.
For complex scenarios where planes can have small included angles, a simple yet effective sandwich layer is introduced to eliminate false point-to-plane matches.
Experiments on 22 sequences across 5 open datasets show that, by contrast, our skeleton tracking consumes only a minimum of 36\% of the original local map size, achieving 4$\times$ faster in residual computing and up to 1.92$\times$ overall FPS, while maintaining the same level of accuracy.
To make contributions to the SLAM community, we publish the full C++ implementation of this paper.


\section*{APPENDIX}

\begin{table}[h]
    \scriptsize
    \centering
    \caption{Details of All the 22 Sequences Used in Sec.~\ref{sec:experiments}}
    \begin{threeparttable}
        \begin{tabular}{llcc}
            \toprule
            & \begin{tabular}[c]{@{}c@{}}Sequence\end{tabular} & \begin{tabular}[c]{@{}c@{}}Duration\\$(\mathrm{min:sec})$ \end{tabular} & \begin{tabular}[c]{@{}c@{}}Distance\\$(\mathrm{km})$ \end{tabular} \\
            \midrule

            \texttt{LIOSAM-1} & \texttt{Campus\,(small)} & 6:47 & 0.55 \\
            \texttt{LIOSAM-2} & \texttt{Campus\,(large)} & 16:34 & 1.44 \\
            \texttt{LIOSAM-3} & \texttt{Walking} & 10:55 & 0.80 \\
            \texttt{LIOSAM-4} & \texttt{Garden} & 6:02 & 0.46 \\
            \texttt{LIOSAM-5} & \texttt{Park} & 9:20 & 0.65 \\

            \texttt{NCLT-1} & \texttt{2013\_01\_10} & 17:05 & 1.14 \\
            \texttt{NCLT-2} & \texttt{2012\_04\_29} & 43:19 & 3.19 \\

            \texttt{UTBM-1} & \texttt{2019\_01\_31} & 16:00 & 6.40 \\
            \texttt{UTBM-2} & \texttt{2018\_07\_17} & 15:59 & 4.99 \\
            \texttt{UTBM-3} & \texttt{2018\_07\_19} & 15:26 & 4.98 \\
            \texttt{UTBM-4} & \texttt{2018\_07\_20} & 16:45 & 4.99 \\
            \texttt{UTBM-5} & \texttt{2019\_04\_12 roundabout} & 12:10 & 4.86 \\
            \texttt{UTBM-6} & \texttt{2019\_04\_18} & 14:55 & 5.14 \\

            \texttt{ULHK-1} & \texttt{2019\_01\_17} & 5:18 & 0.60 \\
            \texttt{ULHK-2} & \texttt{2019\_03\_17} & 5:18 & 0.62 \\

            \texttt{KITTI-1} & \texttt{Raw 2011\_09\_30\_drive\_0034} & 2:07 & 0.92 \\
            \texttt{KITTI-2} & \texttt{Raw 2011\_09\_26\_drive\_0002} & 0:08 & 0.08 \\
            \texttt{KITTI-3} & \texttt{Raw 2011\_09\_26\_drive\_0061} & 1:12 & 0.49 \\
            \texttt{KITTI-4} & \texttt{Raw 2011\_09\_26\_drive\_0064} & 0:59 & 0.44 \\
            \texttt{KITTI-5} & \texttt{Raw 2011\_09\_30\_drive\_0018} & 4:48 & 2.21 \\
            \texttt{KITTI-6} & \texttt{Raw 2011\_09\_26\_drive\_0084} & 0:40 & 0.24 \\
            \texttt{KITTI-7} & \texttt{Odometry Seq.\,08} & 8:56 & 3.22 \\

            \bottomrule
        \end{tabular}
    \end{threeparttable}
    \label{tab:details_seq}
\end{table}

\bibliographystyle{IEEEtran}
\bibliography{ref}
\end{document}